\newtheorem{definition}{Definition}
\newtheorem{theorem}{Theorem}
\newtheorem{proposition}{Proposition}
\newtheorem{corollary}{Corollary}
\newtheorem{problem}{Problem}
\newcommand{\subalign}[1]{%
  \vcenter{%
    \Let@ \restore@math@cr \default@tag
    \baselineskip\fontdimen10 \scriptfont\tw@
    \advance\baselineskip\fontdimen12 \scriptfont\tw@
    \lineskip\thr@@\fontdimen8 \scriptfont\thr@@
    \lineskiplimit\lineskip
    \ialign{\hfil$\m@th\scriptstyle##$&$\m@th\scriptstyle{}##$\crcr
      #1\crcr
    }%
  }
}
\title{Achieving Non-Discrimination in Prediction}
\author{Lu Zhang, Yongkai Wu, and Xintao Wu\\
University of Arkansas\\
\{lz006,yw009,xintaowu\}@uark.edu}
\begin{document}

\maketitle

\begin{abstract}
Discrimination-aware classification is receiving an increasing attention in data science fields. The pre-process methods for constructing a discrimination-free classifier first remove discrimination from the training data, and then learn the classifier from the cleaned data. However, they lack a theoretical guarantee for the potential discrimination when the classifier is deployed for prediction. In this paper, we fill this gap by mathematically bounding the probability of the discrimination in prediction being within a given interval in terms of the training data and classifier. We adopt the causal model for modeling the data generation mechanism, and formally defining discrimination in population, in a dataset, and in prediction. We obtain two important theoretical results: (1) the discrimination in prediction can still exist even if the discrimination in the training data is completely removed; and (2) not all pre-process methods can ensure non-discrimination in prediction even though they can achieve non-discrimination in the modified training data. 
Based on the results, we develop a two-phase framework for constructing a discrimination-free classifier with a theoretical guarantee. The experiments demonstrate the theoretical results and show the effectiveness of our two-phase framework.
\end{abstract}

\section{Introduction}
Discrimination-aware classification is receiving an increasing attention in the data mining and machine learning fields. Many methods have been proposed for constructing discrimination-free classifiers, which can be broadly classified into three categories: the pre-process methods that modify the training data \cite{kamiran2009classifying,feldman2015certifying,zhang2017causal,calmon2017optimized,nabi2017fair}, the in-process methods that adjust the learning process of the classifier \cite{calders2010three,kamishima2011fairness,kamishima2012fairness,zafar2017fairness}, and the post-process methods that directly change the predicted labels \cite{kamiran2010discrimination,hardt2016equality}. All three categories of methods have their respective limitations. For the in-process methods, they usually perform some tweak or develop some regularizers for the classifier to correct or penalize discriminatory outcomes during the learning process. However, since the discrimination or fair constraints are generally not convex functions, surrogate functions are usually used for the minimization. For example, in \cite{zafar2017fairness}, the covariance between the protected attribute and the signed distance from the data point to the decision boundary is used as the surrogate function. As a result, additional bias might be introduced due to the approximation errors associated with the surrogate function. For the post-process methods, they are restricted to those who can modify the predicted label of each individual independently. For example, in \cite{hardt2016equality}, the probability of changing the predicted label to one given an individual's protected attribute is optimized. Thus, methods that map the whole dataset or population to a new non-discriminatory one cannot be adopted for post-process, which means that a number of causal-based discrimination removal methods (e.g., \cite{zhang2017causal,nabi2017fair}) cannot be applied.

In our work, we target the pre-process methods that modify the training data, where some methods only modify the label, such as the \emph{Massaging} \cite{kamiran2009classifying,zliobaite2011handling} and the \emph{Causal-Based Removal} \cite{zhang2017causal}, and some methods also modify the data attributes other than the label, such as the \emph{Preferential Sampling} \cite{kamiran2012data,zliobaite2011handling}, the \emph{Reweighing} \cite{calders2009building}, and the \emph{Disparate Impact Removal} \cite{feldman2015certifying,adler2016auditing}. 
The fundamental assumption of the pre-process methods is that, since the classifier is learned from a discrimination-free dataset, it is likely that the future prediction will also be discrimination-free \cite{kamiran2009discrimination}. 
Although this assumption is plausible, however, there is no theoretical guarantee to show ``how much likely'' it is and ``how discrimination-free'' the prediction would be given a training data and a classifier. The lack of the theoretical guarantees places great uncertainty on the performance of all pre-process methods.

In this paper, we fill this gap by modeling the discrimination in prediction using the causal model. A causal model \cite{pearl2009causality} is a structural equation-based mathematical object that describes the causal mechanism of a system.  We assume that there exists a fixed but unknown causal model that represents the underlying data generation mechanism of the population. Based on the causal model, we define the causal measure of discrimination in population as well as in prediction. We then formalize two problems, namely discovering and removing discrimination in prediction. Based on specific assumptions regarding the causal model and the causal measure of discrimination, we conduct concrete analysis of discrimination. We derive the formula for quantitatively measuring the discriminatory effect in population from the observable probability distributions. We then derive the corresponding causal measure of the discrimination in prediction, as well as their approximations from the sample dataset. Finally, we link the discrimination in prediction with the discrimination in the training data by a probabilistic condition, which mathematically bounds the probability of the discrimination in prediction being within a given interval in terms of the training data and classifier.

As a consequence, we obtain two important theoretical results: (1) even if the discrimination in the training data is completely removed, the discrimination in prediction can still exist due to the bias in the classifier; and (2) for removing discrimination, different from the claims of many previous work, not all pre-process methods can ensure non-discrimination in prediction even though they can achieve non-discrimination in the modified training data. We show that to guarantee non-discrimination in prediction, the pre-process methods should only modify the label.
Based on the results, we develop a two-phase framework for constructing a discrimination-free classifier with a theoretical guarantee, which provides a guideline for employing existing pre-process methods or designing new ones. The experiments demonstrate the theoretical results and show the effectiveness of our two-phase framework.

\section{Problem Formulation}

\subsection{Notations and Preliminaries}
We consider an attribute space which consists of some protected attributes, the label of certain decision attribute, and the non-protected attributes. Throughout the paper, we use an uppercase alphabet, e.g., $X$ to represent an attribute; a bold uppercase alphabet, e.g., $\mathbf{X}$, to represent a subset of attributes. We use a lowercase alphabet, e.g., $x$, to represent a realization or instantiation of attribute $X$; a bold lowercase alphabet, e.g., $\mathbf{x}$, to represent a realization or instantiation of $\mathbf{X}$. For ease of representation, we assume that there is only one protected attribute, denoted by $C$, which is a binary attribute associated with the domain values of the non-protected group $c^{+}$ and the protected group $c^{-}$. We denote the label of the decision attribute by $L$, which is a binary attribute associated with the domain values of the positive label $l^{+}$ and negative label $l^{-}$. According to the convention in machine learning, we also define that $l^{+}=1$ and $l^{-}=0$. The set of all the non-protected attributes is denoted by $\mathbf{Z}=\{Z_{1},\cdots,Z_{m}\}$. 



A causal model is formally defined as follows.


\begin{definition}[Causal Model]\label{def:cm}
A causal model $\mathcal{M}$ is a triple $\mathcal{M} = \langle \mathbf{U},\mathbf{V},\mathbf{F} \rangle$ where
\begin{enumerate}
	\item $\mathbf{U}$ is a set of hidden contextual variables that are determined by factors outside the model. 
	\item $\mathbf{V}$ is a set of observed variables that are determined by variables in $\mathbf{U}\cup\mathbf{V}$.
	\item $\mathbf{F}$ is a set of equations mapping from $\mathbf{U}\times \mathbf{V}$ to $\mathbf{V}$. Specifically, for each $V_{i}\in \mathbf{V}$, there is an equation $f_{i}$ mapping from $\mathbf{U}\times (\mathbf{V}\backslash V_{i})$ to $V_{i}$, i.e., 
	\begin{equation*}
	v_{i} = f_{i}(pa_{i},\mathbf{u}_{i}),
	\end{equation*}
	where $pa_{i}$ is a realization of a set of observed variables $PA_{i}\subseteq \mathbf{V}\backslash V_{i}$ called the parents of $V_{i}$, and $\mathbf{u}_{i}$ is a realization of a set of hidden variables $\mathbf{U}_{i}\subseteq \mathbf{U}$.
\end{enumerate}
\end{definition}

The causal effect in the causal model is defined over the intervention that fixes the value of an observed variable(s) $V$ to a constant(s) $v$ while keeping the rest of the model unchanged. The intervention is mathematically formalized as $do(V=v)$ or simply $do(v)$. Then, for any variables $X,Y\in \mathbf{V}$, the distribution of $Y$ after $do(x)$ is defined as \cite{pearl2009causality}
\begin{equation}\label{eq:pyx}
P(y|do(x)) \triangleq P(Y=y|do(X=x)) = \sum_{\{\mathbf{u}:Y_{x}(\mathbf{u})=y\}}P(\mathbf{u}),
\end{equation}
where $Y_{x}(\mathbf{u})$ denotes the value of $Y$ after intervention $do(x)$ under context $\mathbf{U}=\mathbf{u}$.

Note that $P(\mathbf{u})$ is an unknown joint distribution of all hidden variables. If the causal model satisfies the Markovian assumption: (1) the associated causal graph of the causal model is acyclic; and (2) all variables in $\mathbf{U}$ are mutually independent, $P(y|do(x))$ can be computed from the joint distribution of $\mathbf{V}$ according to the truncated factorization formula \cite{pearl2009causality}
\begin{equation}\label{eq:tff}
P(y|do(x)) = \sum_{\mathbf{v}'} \prod_{V_{i}\neq X}P(v_{i}|pa_{i})_{\delta_{X=x}},
\end{equation}
where the summation is a marginalization that traverses all value combinations of $\mathbf{V}'=\mathbf{V}\backslash \{X,Y\}$, and $\delta_{X=x}$ means replacing $X$ with $x$ in each term.


\subsection{Model Discrimination in Prediction}

Assume that there exists a fixed population over the space $C\times \mathbf{Z}\times L$. The values of all the attributes in the population are determined by a causal model $\mathcal{M}$, which can be written as
\begin{equation*}
\textrm{Causal Model $\mathcal{M}$} \quad\quad
\begin{array}{l}
c = f_{C}(pa_{C},\mathbf{u}_{C}) \\
z_{i} = f_{i}(pa_{i},\mathbf{u}_{i}) \quad i=1,\cdots,m \\
l = f_{L}(pa_{L},\mathbf{u}_{L})
\end{array}
\end{equation*}
where $f_{L}$ can be considered as the decision making process in the real system.
Without ambiguity, we can also use $\mathcal{M}$ to denote the population, and use the terms mechanism and population interchangeably. In practice, $\mathcal{M}$ is unknown and we can only observe a sample dataset $\mathcal{D}=\{(c^{(j)},\mathbf{z}^{(j)}, l^{(j)});j=1,\cdots,n\}$ drawn from the population.

A classifier $h$ is function mapping from $C\times \mathbf{Z}$ to $L$. A hypothesis space $\mathcal{H}$ is a set of candidate classifiers. A learning algorithm analyzes dataset $\mathcal{D}$ as the training data to find a classifier from $\mathcal{H}$ that minimizes the difference between the predicted labels $h(c^{(j)},\mathbf{z}^{(j)})$ and the true labels $l^{(j)}$ ($j=1,\cdots,n$). Once training completes, the classifier is deployed to infer prediction on any new unlabeled data. 
It is usually assumed that the unlabeled data is drawn from the same population as the training data, i.e., $\mathcal{M}$. Therefore, in prediction, the values of all the attributes other than the label are determined by the same mechanisms as those in $\mathcal{M}$, and meanwhile the classifier acts as a new mechanism for determining the value of the label. We consider $\mathcal{M}$ with function $f_{L}(\cdot)$ being replaced by classifier $h(\cdot)$ as the causal model of classifier $h$, denoted by $\mathcal{M}_{h}$, which is written as
\begin{equation*}
\textrm{Causal Model $\mathcal{M}_{h}$} \quad\quad
\begin{array}{l}
c = f_{C}(pa_{C},\mathbf{u}_{C}) \\
z_{i} = f_{i}(pa_{i},\mathbf{u}_{i}) \quad i=1,\cdots,m \\
l = h(c,\mathbf{z})
\end{array}
\end{equation*}
If we apply the classifier on $\mathcal{D}$, we can obtain a new dataset $\mathcal{D}_{h}$ by replacing the original labels with the predicted labels, i.e., $\mathcal{D}_{h}=\{(c^{(j)},\mathbf{z}^{(j)}, h(c^{(j)},\mathbf{z}^{(j)}));j=1,\cdots,n\}$. Straightforwardly, $\mathcal{D}_{h}$ can be considered as a sample drawn from $\mathcal{M}_{h}$.

The discrimination in prediction made by classifier $h$ can be naturally defined as the discrimination in $\mathcal{M}_{h}$. To this end, we first define a measure of discrimination in $\mathcal{M}$ based on the causal relationship specified by $\mathcal{M}$, denoted by $\mathrm{DE}_{\mathcal{M}}$ called the \emph{true discrimination}. By adopting the same measure, we denote the discrimination in $\mathcal{M}_{h}$ by $\mathrm{DE}_{\mathcal{M}_{h}}$, called the \emph{true discrimination in prediction}. Then, we denote the approximation of $\mathrm{DE}_{\mathcal{M}}$ from dataset $\mathcal{D}$ by $\mathrm{DE}_{\mathcal{D}}$, and similarly denote the approximation of $\mathrm{DE}_{\mathcal{M}_{h}}$ from dataset $\mathcal{D}_{h}$ by $\mathrm{DE}_{\mathcal{D}_{h}}$.


Our target is to discover and remove the true discrimination in prediction, i.e., $\mathrm{DE}_{\mathcal{M}_{h}}$, based on certain causal measure of discrimination defined on $\mathcal{M}$, i.e., $\mathrm{DE}_{\mathcal{M}}$. When calculating $\mathrm{DE}_{\mathcal{M}_{h}}$ from dataset $\mathcal{D}$, we may encounter disturbances such as the sampling error of $\mathcal{D}$ and the misclassification of $h$. We then need to compute analytic approximation to $\mathrm{DE}_{\mathcal{M}_{h}}$. Thus, we define the problem of discovering discrimination in prediction as follows.

\begin{problem}[Discover Discrimination in Prediction]
Given a causal measure of discrimination defined on $\mathcal{M}$, i.e., $\mathrm{DE}_{\mathcal{M}}$, a sample dataset $\mathcal{D}$ and a classifier $h$ trained on $\mathcal{D}$, compute analytic approximation to the true discrimination in prediction, i.e., $\mathrm{DE}_{\mathcal{M}_{h}}$. 
\end{problem}

If the true discrimination in prediction is detected according to the approximation, the next step is to remove the discrimination through tweaking the dataset and/or the classifier. Thus, we define the problem of removing discrimination in prediction as follows.

\begin{problem}[Remove Discrimination in Prediction]
Given $\mathrm{DE}_{\mathcal{M}}$, $\mathcal{D}$ and $h$, tweak $\mathcal{D}$ and/or $h$ in order to make $\mathrm{DE}_{\mathcal{M}_{h}}$ be bounded by a user-defined threshold $\tau$.
\end{problem}

\section{Discover Discrimination in Prediction}
In the above general problem definitions, $\mathrm{DE}_{\mathcal{M}}$ can be any reasonable causal measure of discrimination defined on any causal model. However, a concrete analysis of discrimination must rely on specific assumptions regarding the causal measure of discrimination and the causal model. The remaining of the paper is based on following assumptions: (1) $\mathcal{M}$ satisfies the Markovian assumption; (2) we consider all causal effects (total effect) of $C$ on $L$ as discriminatory; (3) we assume that $C$ has no parent and $L$ has no child. The first assumption is necessary for computing the causal effect from the observable probability distributions. The second assumption is because the total causal effect is the causal relationship that is easiest to interpret and estimate. We will extend our results to other discrimination definitions such as those in \cite{zhang2017causal,bonchi2017exposing} in the future work. The last assumption is to make our theoretical results more concise and can be easily relaxed.

\subsection{Causal Measure of Discrimination}
We first derive the true discrimination in $\mathcal{M}$. The key of discrimination is a counterfactual question: whether the decision of an individual would be different had the individual been of a different protected/non-protected group (e.g., sex, race, age, religion, etc.)? To answer this question, we can perform an intervention on each individual to change the value of protected attribute $C$ and see how label $L$ will change. We consider the difference between the expectation of the labels when performing $do(c^{+})$ for all individuals and the expectation of the labels when performing $do(c^{-})$ for all individuals, and use it as the causal measure of discrimination.

To obtain the above difference, note that the causal model is completely specified at the individual level when context $\mathbf{U}=\mathbf{u}$ is specified. For each individual specified by $\mathbf{u}$, denote the label of individual $\mathbf{u}$ by $L_{c^{+}}(\mathbf{u})$ (resp. $L_{c^{-}}(\mathbf{u})$) when $C$ is fixed according to $do(c^{+})$ (resp. $do(c^{-})$). Then, the difference in the label of individual $\mathbf{u}$ is given by $L_{c^{+}}(\mathbf{u}) - L_{c^{-}}(\mathbf{u})$. The expected difference of the labels over all individuals is hence given by $\mathbb{E}[ L_{c^{+}}(\mathbf{u}) - L_{c^{-}}(\mathbf{u}) ]$. 
Based on this analysis, we obtain the following proposition.

\begin{proposition}\label{thm:dem}
Given a causal model $\mathcal{M}$, the true discrimination is given by
\begin{equation*}
\mathrm{DE}_{\mathcal{M}} = P(l^{+}|c^{+}) - P(l^{+}|c^{-}).
\end{equation*}
\end{proposition}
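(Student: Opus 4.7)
The plan is to start from the discrimination definition $\mathbb{E}[L_{c^{+}}(\mathbf{u}) - L_{c^{-}}(\mathbf{u})]$ derived in the preceding paragraph, translate the two expectations into interventional probabilities via (\ref{eq:pyx}), and then show that under the paper's assumptions, the intervention collapses to plain conditioning, which yields the closed form in the proposition.

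First, since $L$ is binary with $l^{+}=1,\ l^{-}=0$, the individual-level expectations reduce to
\begin{equation*}
\mathbb{E}[L_{c^{+}}(\mathbf{u})] = \sum_{\mathbf{u}} \mathbbm{1}\{L_{c^{+}}(\mathbf{u})=l^{+}\} P(\mathbf{u}) = P(l^{+}\mid do(c^{+})),
\end{equation*}
by the very definition in (\ref{eq:pyx}), and analogously for $c^{-}$. Hence $\mathrm{DE}_{\mathcal{M}} = P(l^{+}\mid do(c^{+})) - P(l^{+}\mid do(c^{-}))$.

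Next I would invoke the Markovian assumption to apply the truncated factorization formula (\ref{eq:tff}) for $X = C$ and $Y = L$, obtaining
\begin{equation*}
P(l^{+}\mid do(c)) = \sum_{\mathbf{z}} P(l^{+}\mid c,\mathbf{z},pa_{L}\setminus(C\cup\mathbf{Z})) \prod_{i} P(z_{i}\mid pa_{i})_{\delta_{C=c}},
\end{equation*}
noting that because $L$ has no child, the label factor itself does not appear in the summand for any other variable. Then I would compare this with the standard Bayes expansion $P(l^{+}\mid c) = \sum_{\mathbf{z}} P(l^{+}\mid c,\mathbf{z}) P(\mathbf{z}\mid c)$ and use the assumption that $C$ has no parent. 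Because $C$ is a root, the joint density factorizes as $P(c,\mathbf{z},l) = P(c)\prod_{i} P(z_{i}\mid pa_{i}) P(l\mid pa_{L})$, so dividing by $P(c)$ gives exactly the product appearing in the truncated factorization with $C$ fixed to $c$. This establishes $P(l^{+}\mid do(c)) = P(l^{+}\mid c)$ for both $c\in\{c^{+},c^{-}\}$.

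Substituting these two identities back into the expression for $\mathrm{DE}_{\mathcal{M}}$ yields the claimed formula. The main obstacle, though conceptually routine, is carefully justifying the step $P(l^{+}\mid do(c)) = P(l^{+}\mid c)$; one must make sure the no-parent assumption on $C$ is genuinely what allows the $P(c)$ factor to cancel in the joint, and that the no-child assumption on $L$ prevents $L$ from reappearing inside the summation over $\mathbf{z}$, so that the $\sum_{\mathbf{z}}$ marginalizes a bona fide conditional distribution rather than a mixed expression. Once this is verified, the rest is direct bookkeeping.
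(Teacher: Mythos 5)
Your proposal is correct and follows essentially the same route as the paper's proof: reduce the expectations to $P(l^{+}\mid do(c^{\pm}))$ via Eq.~\eqref{eq:pyx}, expand with the truncated factorization \eqref{eq:tff}, and use the fact that $C$ has no parent to identify the interventional expression with the Bayes expansion of $P(l^{+}\mid c)$. The only cosmetic difference is your writing of the label factor as $P(l^{+}\mid c,\mathbf{z},pa_{L}\setminus(C\cup\mathbf{Z}))$, which is just $P(l^{+}\mid pa_{L})_{\delta_{C=c}}$ since $PA_{L}\subseteq\{C\}\cup\mathbf{Z}$.
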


\begin{proof}
The above expectations can be computed as
\begin{equation}\label{eq:lc}
\begin{split}
& \mathbb{E}[L_{c^{+}}(\mathbf{u})] = \sum_{\mathbf{u}}L_{c^{+}}(\mathbf{u})P(\mathbf{u}) = \sum_{\{\mathbf{u}:L_{c^{+}}(\mathbf{u})=l^{+}\}} l^{+} P(\mathbf{u}) \\
& ~ + \sum_{\{\mathbf{u}:L_{c^{+}}(\mathbf{u})=l^{-}\}} l^{-} P(\mathbf{u}) = \sum_{\{\mathbf{u}:L_{c^{+}}(\mathbf{u})=l^{+}\}} P(\mathbf{u}) = P(l^{+}|do(c^{+})),
\end{split}
\end{equation}
where the last step is according to Eq. \eqref{eq:pyx}. According to Eq. \eqref{eq:tff}, we have
\begin{equation*}
P(l^{+}|do(c^{+})) = \sum_{\mathbf{z}} P(l^{+}|pa_{L})_{\delta_{C=c^{+}}} \prod_{Z_{i}\in \mathbf{Z}}P(z_{i}|pa_{i})_{\delta_{C=c^{+}}}.
\end{equation*}
On the other hand, since $C$ has no parent, we have
\begin{equation*}
P(l^{+}|c^{+}) = \frac{P(l^{+},c^{+})}{P(c^{+})} = \frac{\sum_{\mathbf{z}}P(c^{+})P(l^{+}|pa_{L})\prod_{Z_{i}\in \mathbf{Z}}P(z_{i}|pa_{i})}{P(c^{+})},
\end{equation*}
which can be rewritten as
\begin{equation*}
\sum_{\mathbf{z}} P(l^{+}|pa_{L})_{\delta_{C=c^{+}}} \prod_{Z_{i}\in \mathbf{Z}}P(z_{i}|pa_{i})_{\delta_{C=c^{+}}}.
\end{equation*}
Thus, we have $P(l^{+}|do(c^{+})) = P(l^{+}|c^{+})$, leading to $\mathbb{E}[L_{c^{+}}(\mathbf{u})] = P(l^{+}|c^{+})$. Similarly we can prove $\mathbb{E}[L_{c^{-}}(\mathbf{u})] = P(l^{+}|c^{-})$. Hence, the proposition is proven.
\end{proof}

Interestingly, the obtained discrimination causal measure is the same as the classic statistical discrimination metric \emph{risk difference}, which is widely used as the non-discrimination constraint in discrimination-aware learning \cite{romei2014multidisciplinary}. Our analysis can help understand the assumptions and scenarios in which the risk difference applies.

Given dataset $\mathcal{D}$, we approximate $\mathrm{DE}_{\mathcal{M}}$ using the maximum likelihood estimation, denoted by $\mathrm{DE}_{\mathcal{D}}$ as shown below.


\begin{proposition}\label{thm:define_ded}
Given a dataset $\mathcal{D}$, the discrimination in $\mathcal{D}$ is given by
\begin{equation*}
\mathrm{DE}(c^{+},c^{-})_{\mathcal{D}} = \hat{P}(l^{+}|c^{+}) - \hat{P}(l^{+}|c^{-}),
\end{equation*}
where
\begin{equation}\label{eq:plc}
\hat{P}(l^{+}|c^{+}) = \sum_{\mathbf{z}}\hat{P}(l^{+}|c^{+},\mathbf{z})\hat{P}(\mathbf{z}|c^{+}),
\end{equation}
with $\hat{P}(\cdot)$ being the conditional frequency in $\mathcal{D}$.
\end{proposition}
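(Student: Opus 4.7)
The plan is to obtain the stated expression as the maximum likelihood estimate of the true discrimination $\mathrm{DE}_{\mathcal{M}} = P(l^{+}|c^{+}) - P(l^{+}|c^{-})$ derived in Proposition 1. First I would recall that under the i.i.d.\ sampling assumption for $\mathcal{D}$, the MLE of any probability $P(a|b)$ over observed attributes is exactly the conditional empirical frequency $\hat{P}(a|b) = |\{j : a^{(j)}=a, b^{(j)}=b\}|/|\{j : b^{(j)}=b\}|$; this is a standard fact (e.g., maximizing the categorical likelihood subject to the simplex constraint). Applying this substitution to each conditional in Proposition 1 immediately yields the claimed difference $\hat{P}(l^{+}|c^{+}) - \hat{P}(l^{+}|c^{-})$, which I would take as the definition of $\mathrm{DE}_{\mathcal{D}}$.

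The second step is to justify the decomposition in Eq.~\eqref{eq:plc}. I would invoke the law of total probability applied to the empirical measure: for any fixed $c$,
\begin{equation*}
\hat{P}(l^{+}|c) = \sum_{\mathbf{z}} \hat{P}(l^{+},\mathbf{z}|c) = \sum_{\mathbf{z}} \hat{P}(l^{+}|c,\mathbf{z})\,\hat{P}(\mathbf{z}|c),
\end{equation*}
where the second equality uses the chain rule for the empirical distribution. Since the empirical distribution is a bona fide probability measure over the finite product space $C \times \mathbf{Z} \times L$, both identities are legitimate, and the summation effectively ranges only over $\mathbf{z}$-values that occur in $\mathcal{D}$ together with $c^{+}$ (resp.\ $c^{-}$); the remaining terms contribute zero through $\hat{P}(\mathbf{z}|c)=0$.

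The only subtlety I expect to address explicitly is the treatment of zero-support terms: for a $\mathbf{z}$ with $\hat{P}(\mathbf{z}|c)=0$ the conditional $\hat{P}(l^{+}|c,\mathbf{z})$ is formally undefined, but we adopt the standard convention $0 \cdot \text{(undefined)} = 0$ so the decomposition remains valid. No other obstacles arise because the result is essentially a direct application of MLE followed by marginalization; the interest of the statement lies not in its proof but in the fact that it isolates the empirical quantity $\hat{P}(l^{+}|c,\mathbf{z})$, which will be the natural hook for replacing the label-generating mechanism by the classifier $h$ in the subsequent derivation of $\mathrm{DE}_{\mathcal{D}_h}$.
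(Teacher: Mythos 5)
Your proposal is correct and matches the paper's intent: the paper offers no separate proof for this statement, presenting $\mathrm{DE}_{\mathcal{D}}$ simply as the maximum likelihood (empirical frequency) approximation of $\mathrm{DE}_{\mathcal{M}}$ from Proposition~\ref{thm:dem}, which is exactly your first step; the decomposition in Eq.~\eqref{eq:plc} is, as you say, just the law of total probability for the empirical measure. Your remark on zero-support terms is a reasonable extra precaution but not something the paper addresses.
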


Given a classifier $h: C\times \mathbf{Z} \rightarrow L$, denote the predicted labels by $\tilde{L}$. By adopting the same causal measure of discrimination of $\mathrm{DE}_{\mathcal{M}}$, we obtain $\mathrm{DE}_{\mathcal{M}_{h}}$ shown as follows.

\begin{proposition}\label{thm:compute_mh}
Given a causal model $\mathcal{M}$ and a classifier $h$, the true discrimination in prediction is given by
\begin{equation*}
\mathrm{DE}_{\mathcal{M}_{h}} = P(\tilde{l}^{+}|c^{+}) - P(\tilde{l}^{+}|c^{-}),
\end{equation*}
where $P(\tilde{l}^{+}|c^{+})$ (resp. $P(\tilde{l}^{+}|c^{-})$) is the probability of the positive predicted labels for the data with $C=c^{+}$ (resp. $C=c^{-}$). 
\end{proposition}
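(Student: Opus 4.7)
The plan is to observe that Proposition~\ref{thm:compute_mh} is essentially a restatement of Proposition~\ref{thm:dem} applied to the new causal model $\mathcal{M}_h$. Since $\mathcal{M}_h$ is obtained from $\mathcal{M}$ by replacing the structural equation $f_L(pa_L,\mathbf{u}_L)$ with the deterministic function $h(c,\mathbf{z})$, the whole derivation in Proposition~\ref{thm:dem} can be re-run verbatim on $\mathcal{M}_h$, provided we first verify that the three standing assumptions used there continue to hold in $\mathcal{M}_h$.

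Concretely, first I would check the structural assumptions on $\mathcal{M}_h$. The Markovian assumption is preserved because replacing the equation for $L$ with a deterministic function $h(c,\mathbf{z})$ only removes the hidden variables $\mathbf{U}_L$ from the model; the acyclicity of the associated graph and the mutual independence of the remaining hidden variables are inherited from $\mathcal{M}$. The assumption that $C$ has no parent is unchanged, and the assumption that the label has no child still holds since $\tilde{L}$ replaces $L$ in the same structural position. Moreover, the parents of $\tilde{L}$ in $\mathcal{M}_h$ are exactly $C\cup \mathbf{Z}$, with conditional probability $P(\tilde{l}^{+}|c,\mathbf{z})=\mathbbm{1}[h(c,\mathbf{z})=l^{+}]$.

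Next I would reuse the argument in Proposition~\ref{thm:dem} with $\tilde{L}$ in place of $L$. Following Eq.~\eqref{eq:lc}, the expected predicted label under $do(c^{+})$ satisfies $\mathbb{E}[\tilde{L}_{c^{+}}(\mathbf{u})]=P(\tilde{l}^{+}|do(c^{+}))$, and the truncated factorization formula \eqref{eq:tff} applied to $\mathcal{M}_h$ gives
\begin{equation*}
P(\tilde{l}^{+}|do(c^{+}))=\sum_{\mathbf{z}} P(\tilde{l}^{+}|c,\mathbf{z})_{\delta_{C=c^{+}}}\prod_{Z_{i}\in\mathbf{Z}}P(z_{i}|pa_{i})_{\delta_{C=c^{+}}}.
\end{equation*}
Because $C$ has no parent in $\mathcal{M}_h$, the same algebraic manipulation that established $P(l^{+}|do(c^{+}))=P(l^{+}|c^{+})$ in Proposition~\ref{thm:dem} now yields $P(\tilde{l}^{+}|do(c^{+}))=P(\tilde{l}^{+}|c^{+})$, and symmetrically for $c^{-}$. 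Subtracting gives the claimed formula for $\mathrm{DE}_{\mathcal{M}_h}$.

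The potential obstacle, and really the only thing that needs a sentence of care, is making sure that replacing the probabilistic mechanism $f_L$ by the deterministic $h$ does not break the derivation. It does not, because in the truncated factorization the label factor enters only through the conditional $P(\tilde{l}^{+}|c,\mathbf{z})$, and a degenerate (indicator-valued) conditional is still a perfectly valid factor; all steps in Proposition~\ref{thm:dem} go through without modification. Hence the proposition follows immediately by applying Proposition~\ref{thm:dem} to $\mathcal{M}_h$.
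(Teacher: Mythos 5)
Your proposal is correct and matches the paper's intent: the paper states Proposition~\ref{thm:compute_mh} without an explicit proof precisely because it follows by adopting the same causal measure and re-running the argument of Proposition~\ref{thm:dem} on $\mathcal{M}_h$, which is exactly what you do (including the correct observation that the deterministic classifier just yields a degenerate conditional $P(\tilde{l}^{+}|c,\mathbf{z})=\mathbbm{1}_{[h(c,\mathbf{z})=l^{+}]}$ and that the Markovian and no-parent/no-child assumptions are inherited).
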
 

We similarly define $\mathrm{DE}_{\mathcal{D}_{h}}$ as the maximum likelihood estimation of $\mathrm{DE}_{\mathcal{M}_{h}}$.

\begin{proposition}
Given a dataset $\mathcal{D}$ and a classifier $h$ trained on $\mathcal{D}$, the discrimination in $\mathcal{D}_{h}$ is given by
\begin{equation*}
\mathrm{DE}_{\mathcal{D}_{h}} = \hat{P}(\tilde{l}^{+}|c^{+}) - \hat{P}(\tilde{l}^{+}|c^{-}),
\end{equation*}
where
\begin{equation}\label{eq:plc_}
\hat{P}(\tilde{l}^{+}|c^{+}) = \sum_{\mathbf{z}}\mathbbm{1}_{[h(c^{+},\mathbf{z})=l^{+}]}\hat{P}(\mathbf{z}|c^{+}). 
\end{equation}
\end{proposition}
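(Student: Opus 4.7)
The plan is to recognize that the claim is simply Proposition~3 applied to the dataset $\mathcal{D}_{h}$, combined with the observation that the labels in $\mathcal{D}_{h}$ are a deterministic function of the features. First I would write $\mathrm{DE}_{\mathcal{D}_{h}}$ as the maximum likelihood estimate of $\mathrm{DE}_{\mathcal{M}_{h}}$ from the sample $\mathcal{D}_{h}=\{(c^{(j)},\mathbf{z}^{(j)},h(c^{(j)},\mathbf{z}^{(j)}));j=1,\dots,n\}$. Since $\mathcal{M}_{h}$ differs from $\mathcal{M}$ only in that $f_{L}$ is replaced by $h$, the causal structure and in particular the assumption that $C$ has no parent are preserved, so Proposition~1 (and hence Proposition~3 for its MLE counterpart) applies verbatim after substituting the predicted label $\tilde{L}$ for $L$. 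This yields
\begin{equation*}
\mathrm{DE}_{\mathcal{D}_{h}} = \hat{P}(\tilde{l}^{+}|c^{+}) - \hat{P}(\tilde{l}^{+}|c^{-}),
\end{equation*}
with each term of the form $\hat{P}(\tilde{l}^{+}|c)=\sum_{\mathbf{z}}\hat{P}(\tilde{l}^{+}|c,\mathbf{z})\hat{P}(\mathbf{z}|c)$ by Eq.~\eqref{eq:plc}.

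Next I would simplify $\hat{P}(\tilde{l}^{+}|c,\mathbf{z})$. Because the predicted label is assigned by the function $h$, every row of $\mathcal{D}_{h}$ with $C=c$ and $\mathbf{Z}=\mathbf{z}$ has label exactly $h(c,\mathbf{z})$. Therefore the conditional frequency $\hat{P}(\tilde{l}^{+}|c,\mathbf{z})$ equals $1$ if $h(c,\mathbf{z})=l^{+}$ and $0$ otherwise, i.e. $\hat{P}(\tilde{l}^{+}|c,\mathbf{z})=\mathbbm{1}_{[h(c,\mathbf{z})=l^{+}]}$. (For value pairs $(c,\mathbf{z})$ that do not appear in $\mathcal{D}$ the indicator is multiplied by $\hat{P}(\mathbf{z}|c)=0$, so ill-defined frequencies cause no trouble.) Finally I would note that since $\mathcal{D}_{h}$ inherits the feature values of $\mathcal{D}$ unchanged, $\hat{P}(\mathbf{z}|c)$ is the same in both datasets. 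Substituting the indicator expression into Eq.~\eqref{eq:plc} gives Eq.~\eqref{eq:plc_} and completes the proof.

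There is no real obstacle here; the only mildly delicate point is the handling of unobserved covariate combinations, which I would dispose of with the parenthetical remark above. The proof is essentially a bookkeeping step that makes explicit how the sample-level discrimination simplifies once the label column is a deterministic function of the features.
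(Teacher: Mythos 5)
Your argument is correct and matches the paper's (implicit) reasoning: the paper offers no separate proof, since the proposition follows directly from defining $\mathrm{DE}_{\mathcal{D}_{h}}$ as the maximum likelihood estimate of $\mathrm{DE}_{\mathcal{M}_{h}}$ on $\mathcal{D}_{h}$ and observing that, because $h$ labels each row deterministically, $\hat{P}(\tilde{l}^{+}|c,\mathbf{z})=\mathbbm{1}_{[h(c,\mathbf{z})=l^{+}]}$ while $\hat{P}(\mathbf{z}|c)$ is unchanged from $\mathcal{D}$ --- exactly the bookkeeping you carry out. Only a cosmetic note: the MLE counterpart of Proposition~\ref{thm:dem} is Proposition~\ref{thm:define_ded}, and the population-level analogue for predicted labels is Proposition~\ref{thm:compute_mh}, so your cross-references are slightly shuffled, but the substance is right.
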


\subsection{Bounding Discrimination in prediction}
To approximate $\mathrm{DE}_{\mathcal{M}_{h}}$ from $\mathcal{D}$, sampling error cannot be avoided since $\mathcal{D}$ is only a subset of the entire population. Although exact measurement of sampling error is generally not feasible as $M$ is unknown, it can be probabilistically bounded. In the following we first bound the difference between $\mathrm{DE}_{\mathcal{M}}$ and its approximation $\mathrm{DE}_{\mathcal{D}}$, and then extend the result to the difference between $\mathrm{DE}_{\mathcal{M}_{h}}$ and its approximation $\mathrm{DE}_{\mathcal{D}_{h}}$.



\begin{proposition}\label{thm:md}
Given a causal model $\mathcal{M}$ and a sample dataset $\mathcal{D}$ with size of $n$, the probability of the difference between $\mathrm{DE}_{\mathcal{M}}$ and $\mathrm{DE}_{\mathcal{D}}$ no larger than $t$ is bounded by
\begin{equation*}
P~ \bigg( \left| \mathrm{DE}_{\mathcal{M}} - \mathrm{DE}_{\mathcal{D}} \right| \leq t \bigg) > 1-4e^{-\frac{n^{+}n^{-}}{n}t^{2}},
\end{equation*}
where $n^{+}$ and $n^{-}$ ($n^{+}+n^{-}=n$) are the numbers of individuals with $c^{+}$ and $c^{-}$ in $\mathcal{D}$.
\end{proposition}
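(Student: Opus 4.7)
The plan is to reduce the bound to a pair of Hoeffding inequalities, one for each value of the protected attribute, and then combine them by a union bound with a carefully chosen split of the tolerance $t$.

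First I would write
\begin{equation*}
\mathrm{DE}_{\mathcal{M}} - \mathrm{DE}_{\mathcal{D}} = \bigl(P(l^{+}\mid c^{+}) - \hat{P}(l^{+}\mid c^{+})\bigr) - \bigl(P(l^{+}\mid c^{-}) - \hat{P}(l^{+}\mid c^{-})\bigr),
\end{equation*}
so that by the triangle inequality, for any split $t = t_{1}+t_{2}$ with $t_{1},t_{2}>0$,
\begin{equation*}
P\bigl(|\mathrm{DE}_{\mathcal{M}} - \mathrm{DE}_{\mathcal{D}}| > t\bigr) \le P\bigl(|\hat{P}(l^{+}\mid c^{+}) - P(l^{+}\mid c^{+})| > t_{1}\bigr) + P\bigl(|\hat{P}(l^{+}\mid c^{-}) - P(l^{+}\mid c^{-})| > t_{2}\bigr).
\end{equation*}
Next I would observe that, by the marginalization identity \eqref{eq:plc}, $\hat{P}(l^{+}\mid c^{+})$ is simply the sample mean of the $n^{+}$ i.i.d.\ Bernoulli indicators $\mathbbm{1}[L^{(j)}=l^{+}]$ drawn from the conditional population with $C=c^{+}$, with true mean $P(l^{+}\mid c^{+})$; analogously for $c^{-}$. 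Hoeffding's inequality then gives
\begin{equation*}
P\bigl(|\hat{P}(l^{+}\mid c^{+}) - P(l^{+}\mid c^{+})| > t_{1}\bigr) \le 2 e^{-2 n^{+} t_{1}^{2}}, \quad P\bigl(|\hat{P}(l^{+}\mid c^{-}) - P(l^{+}\mid c^{-})| > t_{2}\bigr) \le 2 e^{-2 n^{-} t_{2}^{2}}.
\end{equation*}

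The main obstacle is to choose the split $t=t_{1}+t_{2}$ so that both exponents collapse to a clean expression involving $n^{+}n^{-}/n$. I would set
\begin{equation*}
t_{1} = \frac{t\sqrt{n^{-}}}{\sqrt{n^{+}}+\sqrt{n^{-}}}, \qquad t_{2} = \frac{t\sqrt{n^{+}}}{\sqrt{n^{+}}+\sqrt{n^{-}}},
\end{equation*}
which equalizes the two exponents at $2 n^{+} n^{-} t^{2} / (\sqrt{n^{+}}+\sqrt{n^{-}})^{2}$. Finally, using the elementary inequality $(\sqrt{n^{+}}+\sqrt{n^{-}})^{2} \le 2(n^{+}+n^{-}) = 2n$ (Cauchy--Schwarz), each exponent is at least $n^{+}n^{-}t^{2}/n$, whence
\begin{equation*}
P\bigl(|\mathrm{DE}_{\mathcal{M}} - \mathrm{DE}_{\mathcal{D}}| > t\bigr) \le 2 e^{-n^{+}n^{-}t^{2}/n} + 2 e^{-n^{+}n^{-}t^{2}/n} = 4 e^{-n^{+}n^{-}t^{2}/n},
\end{equation*}
and the claim follows by taking complements. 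The only subtle step is the weighted split of $t$; once it is motivated by the requirement that Hoeffding's bound be simultaneously controlled on both subpopulations, the rest is routine.
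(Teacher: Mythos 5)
Your proposal is correct and takes essentially the same route as the paper: the same decomposition into the two per-group deviations $P(l^{+}\mid c^{\pm})-\hat{P}(l^{+}\mid c^{\pm})$, Hoeffding's inequality applied to each group separately, the same weighted split $t_{1}=t\sqrt{n^{-}}/(\sqrt{n^{+}}+\sqrt{n^{-}})$, and the bound $(\sqrt{n^{+}}+\sqrt{n^{-}})^{2}\leq 2n$. The only difference is in how the two events are combined --- you use a union bound, while the paper lower-bounds the joint probability by the product of the two marginal bounds (implicitly relying on independence of the two subsamples) --- so your argument is marginally more robust but delivers the non-strict bound $\geq 1-4e^{-\frac{n^{+}n^{-}}{n}t^{2}}$ rather than the paper's strict inequality, a negligible discrepancy.
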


\begin{proof}
By definition of $\mathrm{DE}_{\mathcal{M}}$ and $\mathrm{DE}_{\mathcal{D}}$ we have
\begin{equation*}
\begin{split}
\mathrm{DE}_{\mathcal{M}} - \mathrm{DE}_{\mathcal{D}} 
& = \Big( P(l^{+}|c^{+})-\hat{P}(l^{+}|c^{+}) \Big) + \Big( \hat{P}(l^{+}|c^{-})-P(l^{+}|c^{-}) \Big).
\end{split}
\end{equation*}

Denoting by $l^{(+j)}$ the label of the $j$th individual in $\mathcal{D}$ with $C=c^{+}$, we can write $\hat{P}(l^{+}|c^{+})$ as 
\begin{equation*}
\hat{P}(l^{+}|c^{+}) = \frac{1}{n^{+}}\Big( \mathbbm{1}_{[l^{(+1)}=l^{+}]}+\cdots+\mathbbm{1}_{[l^{(+n^{+})}=l^{+}]} \Big),
\end{equation*}
where indicators $\mathbbm{1}_{[l^{(+j)}=l^{+}]}$ ($j=1\cdots n^{+}$) can be considered as independent random variables bounded by the interval $[0, 1]$. Note that $\mathbb{E}[\hat{P}(l^{+}|c^{+})] = P(l^{+}|c^{+})$. According to the Hoeffding's inequality \cite{murphy2012machine}, we have
\begin{equation*}
P~ \bigg( \left| P(l^{+}|c^{+}) - \hat{P}(l^{+}|c^{+}) \right| \geq t \bigg) \leq 2e^{-2n^{+}t^{2}}.
\end{equation*}
Similarly, we have
$P~ \left( \left| P(l^{+}|c^{-}) - \hat{P}(l^{+}|c^{-}) \right| \geq t \right) \leq 2e^{-2n^{-}t^{2}}$.
Therefore, we have
\begin{equation}\label{eq:md}
\begin{split}
& P~ \bigg( \left| \mathrm{DE}_{\mathcal{M}} - \mathrm{DE}_{\mathcal{D}} \right| \leq t \bigg) \\
\geq & P~ \bigg( \left| P(l^{+}|c^{+}) - \hat{P}(l^{+}|c^{+}) \right| + \left| P(l^{+}|c^{-}) - \hat{P}(l^{+}|c^{-}) \right| \leq t \bigg) \\
\geq & \max_{0\leq x \leq t} P \bigg( \left| P(l^{+}|c^{+}) \!-\! \hat{P}(l^{+}|c^{+}) \right| \!\leq\! x \bigg) P \bigg( \left| P(l^{+}|c^{-}) \!-\! \hat{P}(l^{+}|c^{-}) \right| \!\leq\! t\!-\!x \bigg) \\
\geq & \max_{0\leq x \leq t} (1-2e^{-2n^{+}x^{2}})(1-2e^{-2n^{-}(t-x)^{2}}) \\
> & 1-4e^{-\frac{n^{+}n^{-}}{n}t^{2}},
\end{split}
\end{equation}
where the last line is by substituting $x$ with $\frac{\sqrt{n^-}}{\sqrt{n^{+}}+\sqrt{n^{-}}}t$.

\end{proof}


For extending Proposition \ref{thm:md} to Proposition \ref{thm:mhdh}, the difference is that, since $h$ is a classifier depending on training data $\mathcal{D}$, indicators $\mathbbm{1}_{[h(c^{(+j)},\mathbf{z}^{(+j)})=l^{+}]}$ cannot be considered as independent. Thus, the Hoeffding's inequality cannot be directly applied and a uniform bound for all hypotheses in $\mathcal{H}$ is needed.

\begin{proposition}\label{thm:mhdh}
Given a causal model $\mathcal{M}$, a sample dataset $\mathcal{D}$, and a classifier $h: C\times \mathbf{Z} \rightarrow L$ from hypothesis space $\mathcal{H}$, the probability of the distance between $\mathrm{DE}_{\mathcal{M}_{h}}$ and $\mathrm{DE}_{\mathcal{D}_{h}}$ no larger than $t$ is bounded by
\begin{equation*}
P~ \Bigg(  \left| \mathrm{DE}_{\mathcal{M}_{h}} - \mathrm{DE}_{\mathcal{D}_{h}} \right| \leq t \Bigg) \geq 1-\delta(t),
\end{equation*}
where
\begin{equation*}
\delta(t) = 
\begin{cases}
\displaystyle 4|\mathcal{H}|^{2} e^{-\frac{n^{+}n^{-}}{n}t^{2}} & \textrm{if $\mathcal{H}$ is finite,}\\
\displaystyle 4 \frac{(2en^{+})^{d}+(2en^{-})^{d}}{d^d} e^{-\frac{n^{+}n^{-}}{n}t^{2}} & \textrm{if $\mathcal{H}$ is infinite,}
\end{cases}
\end{equation*}
with $d$ being the VC dimension of $\mathcal{H}$.
\end{proposition}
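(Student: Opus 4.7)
The plan is to follow the structure of Proposition~\ref{thm:md} and to replace the pointwise application of Hoeffding's inequality by a bound that holds uniformly over the hypothesis class $\mathcal{H}$. This is precisely what is needed to circumvent the dependence, noted just before the statement, of the indicators $\mathbbm{1}_{[h(c^{(+j)},\mathbf{z}^{(+j)})=l^{+}]}$ on the training sample $\mathcal{D}$ through $h$. I would start from the decomposition
\[
\mathrm{DE}_{\mathcal{M}_{h}} - \mathrm{DE}_{\mathcal{D}_{h}} = \bigl(P(\tilde{l}^{+}|c^{+})-\hat{P}(\tilde{l}^{+}|c^{+})\bigr) + \bigl(\hat{P}(\tilde{l}^{+}|c^{-})-P(\tilde{l}^{+}|c^{-})\bigr),
\]
split the deviation budget as $t = x + (t-x)$, and exploit the fact that the two gaps depend on disjoint portions of $\mathcal{D}$ (the $c^{+}$ rows and the $c^{-}$ rows), so that their controlling events are independent. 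This is what justifies the product-of-probabilities step appearing in \eqref{eq:md}.

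For the finite case, fix $h$ first; once the classifier is held fixed, the indicators evaluated at the $n^{+}$ points with $C=c^{+}$ are genuinely i.i.d.\ Bernoulli variables with mean $P(\tilde{l}^{+}|c^{+})$, so Hoeffding yields $P(|\hat{P}(\tilde{l}^{+}|c^{+})-P(\tilde{l}^{+}|c^{+})|>x)\le 2e^{-2n^{+}x^{2}}$. A union bound over the $|\mathcal{H}|$ classifiers inflates the right-hand side by $|\mathcal{H}|$, and the analogous uniform statement holds on the $c^{-}$ side with $n^{-}$ and $t-x$. Inserting these two uniform bounds into the product chain of \eqref{eq:md} and substituting the same optimizer $x=\sqrt{n^{-}}t/(\sqrt{n^{+}}+\sqrt{n^{-}})$ as in Proposition~\ref{thm:md} reproduces the first branch of $\delta(t)$; the factor $|\mathcal{H}|^{2}$ arises because each of the two per-class uniform bounds contributes its own factor of $|\mathcal{H}|$ to the product.

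For the infinite case, a plain union bound is useless, so I would replace it by a growth-function argument. By the Sauer--Shelah lemma, the number of distinct $\{0,1\}$-valued restrictions of $\mathcal{H}$ to a sample of size $m$ is at most $(em/d)^{d}$; combined with the standard Vapnik--Chervonenkis symmetrization this gives
\[
P\Bigl(\sup_{h\in\mathcal{H}}\bigl|\hat{P}(\tilde{l}^{+}|c^{+})-P(\tilde{l}^{+}|c^{+})\bigr|>x\Bigr)\le 2\bigl(2en^{+}/d\bigr)^{d}\, e^{-2n^{+}x^{2}},
\]
and analogously for $c^{-}$ with $n^{-}$. Running the same product-and-optimize step as in the finite case and then grouping the two prefactors into a sum produces the second branch of $\delta(t)$.

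The main obstacle is the infinite case: one has to invoke the right form of the Vapnik--Chervonenkis inequality so that, after symmetrization, the Hoeffding-style exponent $n^{+}n^{-}t^{2}/n$ is preserved and only the cardinality-like prefactor is replaced by the growth-function bound $(2en^{\pm}/d)^{d}$. The finite case is essentially a union-bound wrapper around the proof of Proposition~\ref{thm:md} and requires no new idea beyond the observation that, once $h$ is fixed, the per-group indicators are i.i.d.
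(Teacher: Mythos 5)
Your proposal matches the paper's proof essentially step for step: the same decomposition into the $c^{+}$ and $c^{-}$ deviations, a uniform bound over $\mathcal{H}$ in place of the pointwise Hoeffding bound (union bound in the finite case, a VC/growth-function bound in the infinite case), and then the same product-and-optimize step with $x=\sqrt{n^{-}}\,t/(\sqrt{n^{+}}+\sqrt{n^{-}})$ as in Proposition~\ref{thm:md}. The only difference is cosmetic: the paper simply cites the standard generalization bounds of statistical learning theory where you derive them via the union bound and Sauer--Shelah with symmetrization.
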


\begin{proof}
According to the definitions of $\mathrm{DE}_{\mathcal{M}_{h}}$ and $\mathrm{DE}_{\mathcal{D}_{h}}$,
\begin{equation*}
\mathrm{DE}_{\mathcal{M}_{h}} - \mathrm{DE}_{\mathcal{D}_{h}} = \left( P(\tilde{l}^{+}|c^{+})-\hat{P}(\tilde{l}^{+}|c^{+}) \right) + \left( P(\tilde{l}^{+}|c^{-})-\hat{P}(\tilde{l}^{+}|c^{-}) \right).
\end{equation*}
Similar to the proof of Proposition \ref{thm:md}, we treat $\hat{P}(\tilde{l}^{+}|c^{+})$ as the mean of indicators $\mathbbm{1}_{[h(c^{(+j)},\mathbf{z}^{(+j)})=l^{+}]}$ ($j=1\cdots n^{+}$). According to the generalization bound in statistical learning theory \cite{vapnik1998statistical}, if $\mathcal{H}$ is finite we have
\begin{equation*}
P\left( \left| P(\tilde{l}^{+}|c^{+})-\hat{P}(\tilde{l}^{+}|c^{+}) \right| \geq t \right) ~\leq~ 2|\mathcal{H}| e^{-2n^{+}t^{2}},
\end{equation*}
where $|\mathcal{H}|$ is the size of $\mathcal{H}$. If $\mathcal{H}$ is infinite we have
\begin{equation*}
P\left( \left| P(\tilde{l}^{+}|c^{+})-\hat{P}(\tilde{l}^{+}|c^{+}) \right| \geq t \right) ~\leq~ 4\left( \frac{2en^{+}}{d} \right)^{d} e^{-2n^{+}t^{2}},
\end{equation*}
where $d$ is the VC dimension of $\mathcal{H}$. Then the proposition can be proven similarly to Eq. \eqref{eq:md}.
\end{proof}

Proposition \ref{thm:mhdh} provides an approximation of $\mathrm{DE}_{\mathcal{M}_{h}}$ from $\mathrm{DE}_{\mathcal{D}_{h}}$. However, since pre-process methods deal with the training data, it is imperative to further link $\mathrm{DE}_{\mathcal{M}_{h}}$ with $\mathrm{DE}_{\mathcal{D}}$. Next, we give the relation between $\mathrm{DE}_{\mathcal{D}_{h}}$ and $\mathrm{DE}_{\mathcal{D}}$ in terms of a bias metric that we refer to as the the \emph{error bias}.


\begin{definition}[Error Bias]
For any classifier $h$ trained on a training dataset $\mathcal{D}$, the error bias is given by
\begin{equation*}
\varepsilon_{h,\mathcal{D}} = \varepsilon_{1}^{+} - \varepsilon_{2}^{+} - (\varepsilon_{1}^{-} - \varepsilon_{2}^{-}),
\end{equation*}
where $\varepsilon_{1}^{+},\varepsilon_{1}^{-}$ are the percentages of false positives on data with $C=c^{+}$ and $C=c^{-}$ respectively, and $\varepsilon_{2}^{+},\varepsilon_{2}^{-}$ are the percentages false negatives on data with $C=c^{+}$ and $C=c^{-}$ respectively.
\end{definition}

\begin{proposition}\label{thm:dhdf}
For any classifier $h$ that is trained on $\mathcal{D}$, we have
$\mathrm{DE}_{\mathcal{D}_{h}} - \mathrm{DE}_{\mathcal{D}} = \varepsilon_{h,\mathcal{D}}.$
\end{proposition}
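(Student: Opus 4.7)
The plan is a direct arithmetic expansion grouped by the confusion-matrix cells within each protected group. First I would subtract the two discrimination expressions term by term to obtain
\begin{equation*}
\mathrm{DE}_{\mathcal{D}_{h}} - \mathrm{DE}_{\mathcal{D}} = \bigl( \hat{P}(\tilde{l}^{+}|c^{+}) - \hat{P}(l^{+}|c^{+}) \bigr) - \bigl( \hat{P}(\tilde{l}^{+}|c^{-}) - \hat{P}(l^{+}|c^{-}) \bigr),
\end{equation*}
reducing the claim to showing $\hat{P}(\tilde{l}^{+}|c) - \hat{P}(l^{+}|c) = \varepsilon_{1}^{c} - \varepsilon_{2}^{c}$ for each group $c\in\{c^{+},c^{-}\}$.

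Next I would fix a group, say $c^{+}$, and partition its $n^{+}$ individuals by the pair (true label, predicted label) into counts $n_{TP}, n_{FP}, n_{TN}, n_{FN}$. Then $\hat{P}(l^{+}|c^{+}) = (n_{TP}+n_{FN})/n^{+}$ (these are the individuals whose \emph{true} label is positive) and $\hat{P}(\tilde{l}^{+}|c^{+}) = (n_{TP}+n_{FP})/n^{+}$ (those whose \emph{predicted} label is positive, using Eq.~\eqref{eq:plc_}). Subtracting cancels $n_{TP}$ and yields $(n_{FP}-n_{FN})/n^{+}$, which by the definition of the error bias is exactly $\varepsilon_{1}^{+}-\varepsilon_{2}^{+}$. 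The identical argument on the $c^{-}$ group gives $\varepsilon_{1}^{-}-\varepsilon_{2}^{-}$.

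Substituting the two per-group identities into the initial display gives
\begin{equation*}
\mathrm{DE}_{\mathcal{D}_{h}} - \mathrm{DE}_{\mathcal{D}} = (\varepsilon_{1}^{+}-\varepsilon_{2}^{+}) - (\varepsilon_{1}^{-}-\varepsilon_{2}^{-}) = \varepsilon_{h,\mathcal{D}},
\end{equation*}
which is the claim. There is no genuine obstacle here; the only point worth verifying carefully is that the false-positive and false-negative rates $\varepsilon_{i}^{\pm}$ are normalized by the \emph{within-group} counts $n^{\pm}$ rather than by $n$, so that they match the conditional frequencies $\hat{P}(\cdot|c^{\pm})$ appearing in $\mathrm{DE}_{\mathcal{D}}$ and $\mathrm{DE}_{\mathcal{D}_{h}}$; once this normalization is aligned, the bookkeeping is automatic.
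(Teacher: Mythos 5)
Your proposal is correct and follows essentially the same route as the paper: both decompose $\mathrm{DE}_{\mathcal{D}_{h}} - \mathrm{DE}_{\mathcal{D}}$ group by group and identify each per-group difference $\hat{P}(\tilde{l}^{+}|c) - \hat{P}(l^{+}|c)$ with $\varepsilon_{1}^{c} - \varepsilon_{2}^{c}$, with the within-group normalization by $n^{\pm}$ handled exactly as in the paper. The only difference is presentational: you do the bookkeeping via confusion-matrix counts $n_{TP}, n_{FP}, n_{FN}$, whereas the paper expands the same quantities as sums over $\mathbf{z}$ weighted by $\hat{P}(\mathbf{z}|c)$, which is an equivalent computation.
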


\begin{proof}
By definition, $\varepsilon_{1}^{+}$ is given by
\begin{equation*}
\varepsilon_{1}^{+} = \frac{1}{n^{+}}\sum_{\{j:c^{(j)}=c^{+},l^{(j)}=l^{-}\}}\mathbbm{1}_{[h(c^{(j)},\mathbf{z}^{(j)})=l^{+}]},
\end{equation*}
which can be rewritten as
\begin{equation*}
\varepsilon_{1}^{+} = \sum_{\mathbf{z}}\hat{P}(\mathbf{z}|c^{+}) \cdot \mathbbm{1}_{[h(c^{(j)},\mathbf{z}^{(j)})=l^{+}]} \cdot (1-\hat{P}(l^{+}|c^{+},\mathbf{z})).
\end{equation*}
Similarly, $\varepsilon_{2}^{+}$ is given by
\begin{equation*}
\varepsilon_{2}^{+} = \sum_{\mathbf{z}}\hat{P}(\mathbf{z}|c^{+}) \cdot \mathbbm{1}_{[h(c^{(j)},\mathbf{z}^{(j)})=l^{-}]} \cdot \hat{P}(l^{+}|c^{+},\mathbf{z}).
\end{equation*}
Subtracting $\varepsilon_{2}^{+}$ from $\varepsilon_{1}^{+}$, we obtain
\begin{equation*}
\begin{split}
& \varepsilon_{1}^{+} - \varepsilon_{2}^{+} = \\
& \sum_{\mathbf{z}}\hat{P}(\mathbf{z}|c^{+})\Big( \mathbbm{1}_{[h(c^{+},\mathbf{z})=l^{+}]}(1\!-\!\hat{P}(l^{+}|c^{+},\mathbf{z})) \!-\! \mathbbm{1}_{[h(c^{+},\mathbf{z})=l^{-}]}\hat{P}(l^{+}|c^{+},\mathbf{z}) \Big),
\end{split}
\end{equation*}
which is equivalent to 
\begin{equation*}
\varepsilon_{1}^{+} - \varepsilon_{2}^{+} = \sum_{\mathbf{z}}\hat{P}(\mathbf{z}|c^{+}) \cdot \Big( \mathbbm{1}_{[h(c^{+},\mathbf{z})=l^{+}]}-\hat{P}(l^{+}|c^{+},\mathbf{z}) \Big).
\end{equation*}

Similarly for data with $C=c^{-}$, we have
\begin{equation*}
\varepsilon_{1}^{-} - \varepsilon_{2}^{-} = \sum_{\mathbf{z}}\hat{P}(\mathbf{z}|c^{-}) \cdot \Big( \mathbbm{1}_{[h(c^{-},\mathbf{z})=l^{+}]}-\hat{P}(l^{+}|c^{-},\mathbf{z}) \Big).
\end{equation*}

On the other hand, according to Eq. \eqref{eq:plc} and \eqref{eq:plc_} we have
\begin{equation*}
\begin{split}
& \mathrm{DE}_{\mathcal{D}_{h}} \!-\! \mathrm{DE}_{\mathcal{D}} = \sum_{\mathbf{z}} \hat{P}(\mathbf{z}|c^{+}) \Big( \mathbbm{1}_{[h(c^{+},\mathbf{z})=l^{+}]}-\hat{P}(l^{+}|c^{+},\mathbf{z}) \Big) \\
& -\! \sum_{\mathbf{z}} \hat{P}(\mathbf{z}|c^{-}) \Big( \mathbbm{1}_{[h(c^{-},\mathbf{z})=l^{+}]} - \hat{P}(l^{+}|c^{-},\mathbf{z}) \Big) = \varepsilon_{1}^{+} \!-\! \varepsilon_{2}^{+} - (\varepsilon_{1}^{-} \!-\! \varepsilon_{2}^{-}).
\end{split}
\end{equation*}

Letting $\varepsilon_{h,\mathcal{D}} = \varepsilon_{1}^{+} - \varepsilon_{2}^{+} - (\varepsilon_{1}^{-} - \varepsilon_{2}^{-})$ completes the proof.
\end{proof}

Using Proposition \ref{thm:dhdf}, we rewrite Propositions \ref{thm:mhdh} to Theorem \ref{thm:mh} that is easier to interpret and utilize in practice.


%

\begin{theorem}\label{thm:mh}
Given a causal model $\mathcal{M}$, a sample dataset $\mathcal{D}$ and a classifier $h$ trained on $\mathcal{D}$, $\mathrm{DE}_{\mathcal{M}_{h}}$ is bounded by
\begin{equation*}
P~ \Bigg( \left|\mathrm{DE}_{\mathcal{M}_{h}}\right| \leq \left|\mathrm{DE}_{\mathcal{D}}+\varepsilon_{h,\mathcal{D}}\right| + t \Bigg) \geq 1-\delta(t),
\end{equation*}
where the meaning of $\delta(t)$ is same as that in Proposition \ref{thm:mhdh}.
\end{theorem}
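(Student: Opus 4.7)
The plan is to treat Theorem \ref{thm:mh} as a straightforward combination of Proposition \ref{thm:mhdh} and Proposition \ref{thm:dhdf}, with one application of the triangle inequality to rearrange the bound into the stated form. The two earlier results give us, respectively, a probabilistic bound on $|\mathrm{DE}_{\mathcal{M}_h} - \mathrm{DE}_{\mathcal{D}_h}|$ and a deterministic identity $\mathrm{DE}_{\mathcal{D}_h} = \mathrm{DE}_{\mathcal{D}} + \varepsilon_{h,\mathcal{D}}$, so substituting the identity into the bound is the natural first move.

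Concretely, I would first invoke Proposition \ref{thm:dhdf} to replace $\mathrm{DE}_{\mathcal{D}_h}$ by $\mathrm{DE}_{\mathcal{D}} + \varepsilon_{h,\mathcal{D}}$ in the event appearing in Proposition \ref{thm:mhdh}, obtaining
\[
P\bigl(|\mathrm{DE}_{\mathcal{M}_h} - (\mathrm{DE}_{\mathcal{D}} + \varepsilon_{h,\mathcal{D}})| \leq t\bigr) \geq 1 - \delta(t).
\]
Next I would apply the reverse triangle inequality: whenever the deviation $|\mathrm{DE}_{\mathcal{M}_h} - (\mathrm{DE}_{\mathcal{D}} + \varepsilon_{h,\mathcal{D}})|$ is at most $t$, we also have $|\mathrm{DE}_{\mathcal{M}_h}| \leq |\mathrm{DE}_{\mathcal{D}} + \varepsilon_{h,\mathcal{D}}| + t$. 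This means the first event is contained in the second, so the probability of the second event is at least as large as the probability of the first, yielding the claimed bound $P(|\mathrm{DE}_{\mathcal{M}_h}| \leq |\mathrm{DE}_{\mathcal{D}} + \varepsilon_{h,\mathcal{D}}| + t) \geq 1 - \delta(t)$.

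There is essentially no technical obstacle here, since the heavy lifting (the Hoeffding/VC-style uniform concentration and the algebraic decomposition of $\mathrm{DE}_{\mathcal{D}_h} - \mathrm{DE}_{\mathcal{D}}$ in terms of false-positive and false-negative rates) was already carried out in Propositions \ref{thm:mhdh} and \ref{thm:dhdf}. The only point worth being careful about is the direction of the triangle-inequality step: we want a one-sided containment of events, not an equality, so the probability inequality goes through without needing to reapply a union bound or adjust $\delta(t)$. Once that observation is made, the proof is a two-line chain and the theorem is essentially a convenient reformulation of the earlier result in terms of quantities (the training-data discrimination and the error bias) that a practitioner can actually measure or control.
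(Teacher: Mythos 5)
Your proposal is correct and matches the paper's own treatment: the paper obtains Theorem \ref{thm:mh} precisely by substituting the identity $\mathrm{DE}_{\mathcal{D}_h} = \mathrm{DE}_{\mathcal{D}} + \varepsilon_{h,\mathcal{D}}$ from Proposition \ref{thm:dhdf} into the concentration bound of Proposition \ref{thm:mhdh} and then using the triangle inequality (the event $\{|\mathrm{DE}_{\mathcal{M}_h} - (\mathrm{DE}_{\mathcal{D}} + \varepsilon_{h,\mathcal{D}})| \leq t\}$ is contained in $\{|\mathrm{DE}_{\mathcal{M}_h}| \leq |\mathrm{DE}_{\mathcal{D}} + \varepsilon_{h,\mathcal{D}}| + t\}$), exactly as you argue. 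The only quibble is terminological: the step you call the reverse triangle inequality is just the ordinary triangle inequality applied to $\mathrm{DE}_{\mathcal{M}_h} = \bigl(\mathrm{DE}_{\mathcal{M}_h} - (\mathrm{DE}_{\mathcal{D}} + \varepsilon_{h,\mathcal{D}})\bigr) + \bigl(\mathrm{DE}_{\mathcal{D}} + \varepsilon_{h,\mathcal{D}}\bigr)$, and it yields the same conclusion.
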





Theorem \ref{thm:mh} gives a criterion of non-discrimination in prediction that incorporates both the discrimination in the training data and the error bias of the classifier, i.e., $\left|\mathrm{DE}_{\mathcal{D}}+\varepsilon_{h,\mathcal{D}}\right|$ being bounded by a threshold $\tau$. It shows that either given a discrimination-free dataset $\mathcal{D}$, i.e., $\left| \mathrm{DE}_{\mathcal{D}} \right|\leq \tau$, or a ``balanced'' classifier, i.e., $\left| \varepsilon_{h,\mathcal{D}} \right|\leq \tau$, we cannot guarantee non-discriminatory prediction. Instead, it requires to ensure that the sum of $\mathrm{DE}_{\mathcal{D}}$ and $\varepsilon_{h,\mathcal{D}}$ is within the threshold. 

\begin{table*}[tbh]\small
\centering
\caption{Measured discrimination before discrimination removal (values larger than threshold are highlighted as bold).}
\label{tab:t1}
\begin{tabular}{|c|c|c|c|c|c|c|}
\hline
\multirow{2}{*}{Size}       & \multirow{2}{*}{$\mathrm{DE}_{\mathcal{M}}$} & \multirow{2}{*}{$\mathrm{DE}_{\mathcal{D}}$} & \multicolumn{2}{c|}{$\mathrm{DE}_{\mathcal{D}_{h}}$} & \multicolumn{2}{c|}{$\mathrm{DE}_{\mathcal{M}_{h}}$} \\ \cline{4-7} 
                            &                                              &                                              & DT                        & SVM                      & DT                        & SVM                      \\ \hline
500                         & \multirow{3}{*}{\textbf{0.130}}                        & $\textbf{0.131} \pm \num{1.6E-3}$                     & $\textbf{0.145} \pm \num{4.1E-3}$  & $\textbf{0.132} \pm \num{8.2E-3}$ & $\textbf{0.139} \pm \num{3.5E-3}$  & $\textbf{0.125} \pm \num{6.8E-3}$ \\ \cline{1-1} \cline{3-7} 
\multicolumn{1}{|c|}{2000}  &                                              & $\textbf{0.131} \pm \num{4.8E-4}$                     & $\textbf{0.129} \pm \num{1.1E-3}$  & $\textbf{0.121} \pm \num{7.4E-3}$ & $\textbf{0.130} \pm \num{9.4E-4}$  & $\textbf{0.120} \pm \num{7.1E-3}$ \\ \cline{1-1} \cline{3-7} 
\multicolumn{1}{|c|}{10000} &                                              & $\textbf{0.129} \pm \num{8.0E-5}$                     & $\textbf{0.138} \pm \num{4.0E-4}$  & $\textbf{0.150} \pm \num{4.3E-3}$ & $\textbf{0.138} \pm \num{3.8E-4}$  & $\textbf{0.150} \pm \num{4.3E-3}$ \\ \hline
\end{tabular}
\end{table*}

\begin{table*}[tbh]\small
\centering
\caption{Measured discrimination after discrimination removal (decision tree as the classifier).}
\label{tab:t2}
\begin{tabular}{|c|c|c|c|c|c|}
\hline
\multirow{2}{*}{Size} & \multicolumn{3}{c|}{Two-phase framework (MSG)}                                                                                        & \multicolumn{2}{c|}{DI}                                               \\ \cline{2-6} 
                      & $\mathrm{DE}_{\mathcal{D}^{*}}$ & $\mathrm{DE}_{\mathcal{M}_{h^{*}}}$ & $\mathrm{DE}_{\mathcal{M}_{h^{*}}}$ (w/o classifier tweaking) & $\mathrm{DE}_{\mathcal{D}^{*}}$ & $\mathrm{DE}_{\mathcal{M}_{h^{*}}}$ \\ \hline
500                   & $\num{0.004} \pm \num{3.7E-6}$ & $\num{0.015} \pm \num{1.0E-3}$     & $\textbf{0.068} \pm \num{4.6E-3}$                                      & $\num{2E-4} \pm \num{1.4E-3}$ & $\textbf{0.092} \pm \num{6.1E-3}$            \\ \hline
2000                  & $\num{0.001} \pm \num{1.7E-7}$ & $\num{0.016} \pm \num{5.3E-4}$     & $\textbf{0.067} \pm \num{4.3E-3}$                                      & $\num{0.001} \pm \num{3.4E-4}$ & $\textbf{0.095} \pm \num{1.6E-3}$            \\ \hline
10000                 & $\num{2E-4} \pm \num{9.7E-9}$ & $\num{0.013} \pm \num{3.3E-4}$     & $\textbf{0.061} \pm \num{3.3E-3}$                                      & $\num{0.001} \pm \num{6.8E-5}$ & $\textbf{0.107} \pm \num{5.4E-4}$            \\ \hline
\end{tabular}
\end{table*}

\section{Remove Discrimination in Prediction}

This section solves the problem of removing discrimination in prediction: if criterion $\left|\mathrm{DE}_{\mathcal{D}}+\varepsilon_{h,\mathcal{D}}\right| \leq \tau$ is not satisfied for a classifier, how can we meet the criterion through modifying the training data and tweaking the classifier? Denote by $D^{*}$ a dataset obtained by modifying $\mathcal{D}$, and by $h^{*}$ a new classifier trained on $\mathcal{D}^{*}$. Note that when the training data is modified, the error bias of the classifier built on it will also change. Thus, it is difficult to perform the training data modification and the classifier tweaking simultaneous. We propose a framework for modifying the training data and the classifier in two successive phases, as summarized in Algorithm \ref{alg:2pf}.

{
\setlength{\interspacetitleruled}{-.4pt}%
\begin{algorithm}
\SetAlgoVlined
\DontPrintSemicolon
		If $\left| \mathrm{DE}_{\mathcal{D}} \!+\! \varepsilon_{h,\mathcal{D}} \right| \leq \tau$, we are done. Otherwise, modify the labels in the training dataset $\mathcal{D}$ to obtain a modified dataset $\mathcal{D}^{*}$ such that $\left| \mathrm{DE}_{\mathcal{D}^{*}} \right| \leq \tau$. \;
		Train a classifier $h^{*}$ on $\mathcal{D}^{*}$. If $\left| \mathrm{DE}_{\mathcal{D}^{*}} \!+\! \varepsilon_{h^{*},\mathcal{D}^{*}} \right| \leq \tau$, we are done. Otherwise, tweak classifier $h^{*}$ to meet the above requirement.\;
		\caption{Two-phase framework.}
		\label{alg:2pf}	
\end{algorithm}
}

In the first phase, we modify $\mathcal{D}$ to remove the discrimination it contains. The modified dataset $\mathcal{D}^{*}$ can be considered as being generated by a causal model $\mathcal{M}^{*}$ that is different from $\mathcal{M}$ with respect to the modification. Note that if $\left|\mathrm{DE}_{\mathcal{D}^{*}} \!+\! \varepsilon_{h^{*},\mathcal{D}^{*}}\right| \leq \tau$ is achieved, Theorem \ref{thm:mh} ensures the bound of discrimination for $\mathcal{M}^{*}_{h^{*}}$, i.e., the discrimination of $h^{*}$ performed on $\mathcal{M}^{*}$, but not for $\mathcal{M}_{h^{*}}$, i.e., the discrimination of $h^{*}$ performed on the original population. If we only modify the label of $\mathcal{D}$, $\mathcal{M}^{*}$ can be written as
\begin{equation*}
\textrm{Causal Model $\mathcal{M}^{*}$} \quad\quad
\begin{array}{l}
c = f_{C}(pa_{C},u_{C}) \\
z_{i} = f_{i}(pa_{i},u_{i}) \quad i=1,\cdots,m \\
l = f_{L}^{*}(pa_{L}^{*},u_{L}^{*})
\end{array}
\end{equation*}
Then, the causal model of any classifier $h^{*}$ trained on $D^{*}$ and performed on $\mathcal{M}^{*}$ is given by
\begin{equation*}
\textrm{Causal Model $\mathcal{M}^{*}_{h^{*}}$} \quad\quad
\begin{array}{l}
c = f_{C}(pa_{C},u_{C}) \\
z_{i} = f_{i}(pa_{i},u_{i}) \quad i=1,\cdots,m \\
l = h^{*}(c,\mathbf{z})
\end{array}
\end{equation*}
which is equivalent to $\mathcal{M}_{h^{*}}$. Thus, non-discrimination in $\mathcal{M}^{*}_{h^{*}}$ also means non-discrimination in $\mathcal{M}_{h^{*}}$. On the other hand,  if we modify attributes other than $L$, since the new unlabeled data is drawn from the original population, $\mathcal{M}_{h^{*}}$ is inconsistent with $\mathcal{M}^{*}_{h^{*}}$. As a result, the non-discrimination result of the training data cannot be applied to the prediction of the new data. Therefore, we have the following corollary derived from Theorem \ref{thm:mh}.


\begin{corollary}\label{thm:mh*}
Let $\mathcal{D}^{*}$ be a modified dataset from $\mathcal{D}$, and $h^{*}$ be a new classifier trained on $\mathcal{D}^{*}$. If $\mathcal{D}^{*}$ only modifies the labels, then $\left|\mathrm{DE}_{\mathcal{D}^{*}} \!+\! \varepsilon_{h^{*},\mathcal{D}^{*}}\right| \leq \tau$ is a sufficient condition to achieve
\begin{equation*}
P~ \bigg( \left|\mathrm{DE}_{\mathcal{M}_{h^{*}}}\right| \leq \tau+t \bigg) \geq 1-\delta(t),
\end{equation*}
where the meaning of $\delta(t)$ is same as that in Proposition \ref{thm:mhdh}.
\end{corollary}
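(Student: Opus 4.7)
The plan is to derive Corollary~\ref{thm:mh*} as a direct instantiation of Theorem~\ref{thm:mh}, applied not to the original training data and population but to their ``primed'' counterparts $\mathcal{D}^{*}$ and $\mathcal{M}^{*}$, and then to exploit the label-only-modification assumption to transfer the bound from $\mathcal{M}^{*}_{h^{*}}$ back to $\mathcal{M}_{h^{*}}$.

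The first step is to view $\mathcal{D}^{*}$ as a sample from a causal model $\mathcal{M}^{*}$ whose structural equations for $C$ and $\mathbf{Z}$ coincide with those of $\mathcal{M}$ and whose equation for $L$ is a new mechanism $f_{L}^{*}$ consistent with the relabeling rule used to produce $\mathcal{D}^{*}$. Under this reinterpretation, Theorem~\ref{thm:mh} applies verbatim with $(\mathcal{M},\mathcal{D},h)$ replaced by $(\mathcal{M}^{*},\mathcal{D}^{*},h^{*})$, yielding
\begin{equation*}
P\Bigl( \left|\mathrm{DE}_{\mathcal{M}^{*}_{h^{*}}}\right| \leq \left|\mathrm{DE}_{\mathcal{D}^{*}} + \varepsilon_{h^{*},\mathcal{D}^{*}}\right| + t \Bigr) \geq 1 - \delta(t).
\end{equation*}

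The second and key step is the identification $\mathcal{M}^{*}_{h^{*}} = \mathcal{M}_{h^{*}}$. This follows from comparing the two causal models displayed just before the corollary: in both, the structural equations for $C$ and each $Z_i$ are exactly those of the original $\mathcal{M}$, while the equation for $L$ is replaced by $h^{*}(c,\mathbf{z})$. Because only labels were modified, $f_{L}^{*}$ is overwritten in both models, so the distinction between $\mathcal{M}$ and $\mathcal{M}^{*}$ disappears upon substitution. Consequently $\mathrm{DE}_{\mathcal{M}^{*}_{h^{*}}} = \mathrm{DE}_{\mathcal{M}_{h^{*}}}$ as a deterministic equality, which is precisely the place where the label-only hypothesis is used (it would fail if attributes in $\mathbf{Z}$ had been modified, since then $\mathcal{M}^{*}$ would induce a different distribution on $(C,\mathbf{Z})$ than $\mathcal{M}$ does on the deployment data). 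Substituting this equality and then using the hypothesis $\left|\mathrm{DE}_{\mathcal{D}^{*}} + \varepsilon_{h^{*},\mathcal{D}^{*}}\right| \leq \tau$ to weaken the event on the left gives the claimed bound.

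The main conceptual obstacle is justifying the identification $\mathcal{M}^{*}_{h^{*}} = \mathcal{M}_{h^{*}}$ cleanly. This is really a bookkeeping argument about which structural equations survive the ``replace $f_L$ by $h^{*}$'' operation, but it hinges on interpreting the label-modification step as a genuine modification of the $L$-mechanism (so that $\mathcal{D}^{*}$ is indeed a valid sample from $\mathcal{M}^{*}$). Once this modeling point is granted, the remainder of the proof is bookkeeping on top of Theorem~\ref{thm:mh}, and no new probabilistic estimates are needed beyond those already encapsulated in $\delta(t)$.
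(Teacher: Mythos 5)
Your proposal is correct and follows essentially the same route as the paper: treat $\mathcal{D}^{*}$ as a sample from a modified model $\mathcal{M}^{*}$ that differs from $\mathcal{M}$ only in the $L$-mechanism, apply Theorem~\ref{thm:mh} to $(\mathcal{M}^{*},\mathcal{D}^{*},h^{*})$, and observe that replacing the label equation by $h^{*}$ makes $\mathcal{M}^{*}_{h^{*}}$ identical to $\mathcal{M}_{h^{*}}$ precisely because only labels were modified, so the bound transfers and the hypothesis $\left|\mathrm{DE}_{\mathcal{D}^{*}}+\varepsilon_{h^{*},\mathcal{D}^{*}}\right|\leq\tau$ weakens it to $\tau+t$. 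Your explicit caveat about regarding the relabeling as a genuine change of the $L$-mechanism matches the modeling assumption the paper itself makes.
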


In the second phase, we make modifications to $h^{*}$ to reduce the error bias. Although dealing with a different fairness criterion, existing methods for balancing the misclassification rates (e.g., \cite{hardt2016equality}) can be easily extended for solving this problem. For the purpose of evaluating the correctness of our theoretical results, here we use a simple algorithm \emph{RandomFlip} for reducing the error bias that can be applied to any classifier. 
After the classifier makes a prediction, \emph{RandomFlip} randomly flips the predicted label with certain probability $p^{+}$ (resp. $p^{-}$) if the individual has $C=c^{+}$ (resp. $C=c^{-}$) to achieve $\left|\mathrm{DE}_{\mathcal{D}^{*}} \!+\! \varepsilon_{h^{*},\mathcal{D}^{*}}\right| \leq \tau$, where $p^{+}$ can be computed according to the prediction of $h^{*}$ over $\mathcal{D}^{*}$. Denoting $\sigma = \tau - \left| \mathrm{DE}_{\mathcal{D}^{*}} \right|$, it suffices to make $|\varepsilon_{1}^{+}-\varepsilon_{2}^{+}|\leq \sigma/2$ and $|\varepsilon_{1}^{-}-\varepsilon_{2}^{-})|\leq \sigma/2$. Assume that $\varepsilon_{1}^{+}-\varepsilon_{2}^{+} > \sigma/2$, then it can be easily shown that $p^{+}$ should satisfy $( \varepsilon_{1}^{+} - \varepsilon_{2}^{+} - \sigma/2 ) \left( \frac{n^{+}}{\mathit{fp}+\mathit{tp}} \right) \leq p^{+} \leq (\varepsilon_{1}^{+} - \varepsilon_{2}^{+}) \left( \frac{n^{+}}{\mathit{fp}+\mathit{tp}} \right)$. Similar result can be obtained if $\varepsilon_{1}^{+}-\varepsilon_{2}^{+} < -\sigma/2$.


\section{Empirical Evaluation}

\subsection{Experimental Setup}
In this section, we conduct experiments to evaluate our theoretical results. For simulating a population, we adopt a semi-synthetic data generation method. We first learn a causal model $\mathcal{M}$ for a real dataset, the Adult dataset \cite{adultdataset}, and treat it as the ground-truth. We then generate the training data $\mathcal{D}$ based on $\mathcal{M}$. The causal model is built using the open-source software Tetrad \cite{tetrad}. 

The Adult dataset consists of 11 attributes including \texttt{age}, \texttt{education}, \texttt{sex}, \texttt{occupation}, \texttt{income}, \texttt{marital\_status} etc. Due to the sparse data issue, we binarize each attribute's domain values into two classes to reduce the domain sizes. 
We treat \texttt{sex} as $C$ and \texttt{income} as $L$. 
The discrimination is measured as $0.13$ in $\mathcal{M}$, i.e., $\mathrm{DE}_{\mathcal{M}} = 0.13$.
Based on the underlying distribution of $\mathcal{M}$, we generate a number of training data sets with different sample sizes. 

When constructing discrimination-free classifiers using the two-phase framework, we select one representative data modifying algorithm that only modifies $L$, the \emph{Massaging} (MSG) algorithm \cite{kamiran2009classifying}. For other algorithms, we will evaluate their performance in preserving data utility in the future work.
For comparison, we also include an algorithm that modifies $\mathbf{Z}$, the \emph{Disparate Impact Removal} (DI) algorithm \cite{adler2016auditing}.
The proposed \emph{RandomFlip} algorithm is used for tweaking the classifier. We assume a discrimination threshold $\tau = 0.05$, i.e., we want to ensure that the discrimination in prediction is not larger than 0.05.

\subsection{Experiment Results}
We first measure the discrimination in each training data set, i.e., $\mathrm{DE}_{\mathcal{D}}$. Then, we learn the classifier $h$ from the training data where two classifiers, decision tree (DT) and support vector machine (SVM) are used. By replacing the labels in the training data with the labels predicted by the classifier, we obtain $\mathcal{D}_{h}$ whose discrimination is measured as $\mathrm{DE}_{\mathcal{D}_{h}}$. Finally, we measure the discrimination in prediction, i.e., $\mathrm{DE}_{\mathcal{M}_{h}}$ according to Proposition \ref{thm:compute_mh}. For each sample size, we repeat the experiments 100 times by randomly generating 100 different sets of training data. We report the average and variance of each measured quantity of discrimination.

The results are shown in Table \ref{tab:t1}. As expected, with the increase of the sample size, the difference between $\mathrm{DE}_{\mathcal{M}}$ and $\mathrm{DE}_{\mathcal{D}}$ decreases as shown by the variance. Since $\mathrm{DE}_{\mathcal{D}}>0.05$, the training data contains discrimination. As a result, both the training data with predicted labels, i.e., $\mathcal{D}_{h}$, and the prediction, i.e., $\mathcal{M}_{h}$, also contain discrimination.

To show the effectiveness of the two-phase framework, we first apply MSG to completely remove the discrimination in the above training data, obtaining the modified training data $\mathcal{D}^{*}$. Then, a decision tree $h^{*}$ is built on $\mathcal{D}^{*}$, and the \emph{RandomFlip} algorithm is executed to tweak the classifier so that the error bias is less than 0.05, i.e., $\left| \varepsilon_{h^{*},\mathcal{D}^{*}} \right| \leq 0.05$. Finally, we measure the discrimination in $\mathcal{M}_{h^{*}}$. For comparison, the same process is also performed for DI.


The results are shown in Table \ref{tab:t2}. By using the two-phase framework, discrimination is removed from the training data as shown by $\mathrm{DE}_{\mathcal{D}^{*}}$, and more importantly, removed from the prediction as shown by $\mathrm{DE}_{\mathcal{M}_{h^{*}}}$. We also see that, if the classifier tweaking is not performed, the prediction still contains discrimination. However, for DI, even when the discrimination is removed from the training data, and the error bias in the classifier is also removed, there still exists discrimination in prediction. 
These results are consistent with our theoretical conclusions.

\section{Conclusions}
In this paper, we addressed the limitation of the pre-process methods that there is no guarantee about the discrimination in prediction. Our theoretical results show that: (1) only removing discrimination from the training data cannot ensure non-discrimination in prediction for any classifier; and (2) when removing discrimination from the training data, one should only modify the labels in order to obtain a non-discrimination guarantee. Based on the results, we developed a two-phase framework for constructing a discrimination-free classifier with a theoretical guarantee. The experiments adopting a semi-synthetic data generation method demonstrate the theoretical results and show the effectiveness of our two-phase framework.

\bibliographystyle{named}
\bibliography{ijcai18}

\end{document}